\setlist{leftmargin=*}
\renewcommand{\epsilon}{\varepsilon}
\icmltitlerunning{Wasserstein Measure Coresets}
\begin{document}

\twocolumn[
\icmltitle{Wasserstein Measure Coresets}



\icmlsetsymbol{equal}{*}

\begin{icmlauthorlist}
\icmlauthor{Sebastian Claici}{mit}
\icmlauthor{Aude Genevay}{mit}
\icmlauthor{Justin Solomon}{mit}
\end{icmlauthorlist}

\icmlaffiliation{mit}{Computer Science and Artificial Intelligence Laboratory, Massachusetts Institute of Technology, Cambridge, MA, USA}

\icmlcorrespondingauthor{Sebastian Claici}{sclaici@mit.edu}


\vskip 0.3in
]



\printAffiliationsAndNotice{}  

\begin{abstract}
  The proliferation of large data sets and Bayesian inference techniques motivates demand for better data sparsification. Coresets provide a principled way of summarizing a large dataset via a smaller one that is guaranteed to match the performance of the full data set on specific problems. Classical coresets, however, neglect the underlying data distribution, which is often continuous. We address this oversight by introducing \emph{Wasserstein measure coresets}, an extension of coresets which by definition takes into account generalization. Our formulation of the problem, which essentially consists in minimizing the Wasserstein distance, is solvable via stochastic gradient descent. This yields an algorithm which simply requires sample access to the data distribution and is able to handle large data streams in an online manner. We validate our construction for inference and clustering.
\end{abstract}

\begin{figure}
    \centering
    \begin{tabular}{@{}c@{\hspace{.1in}}c@{}}
\includegraphics[width=.46\linewidth]{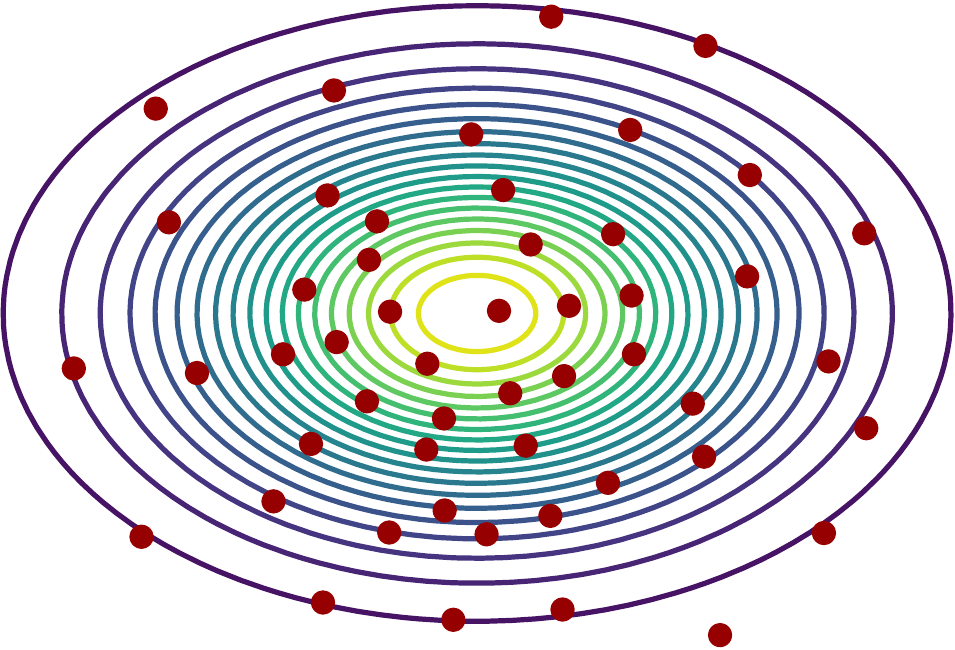}&
\includegraphics[width=.46\linewidth]{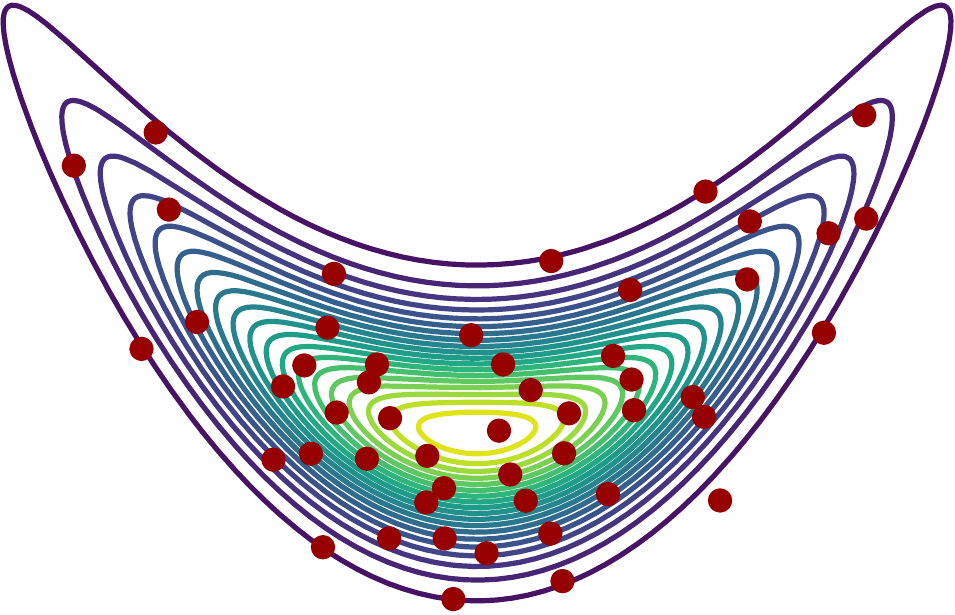}\\
\includegraphics[width=.46\linewidth]{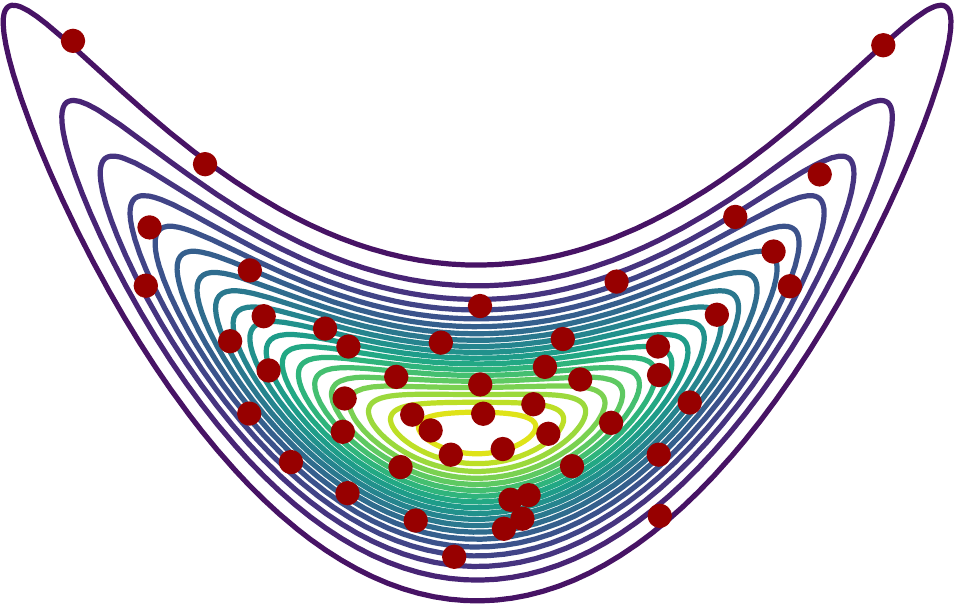}&
\includegraphics[width=.46\linewidth]{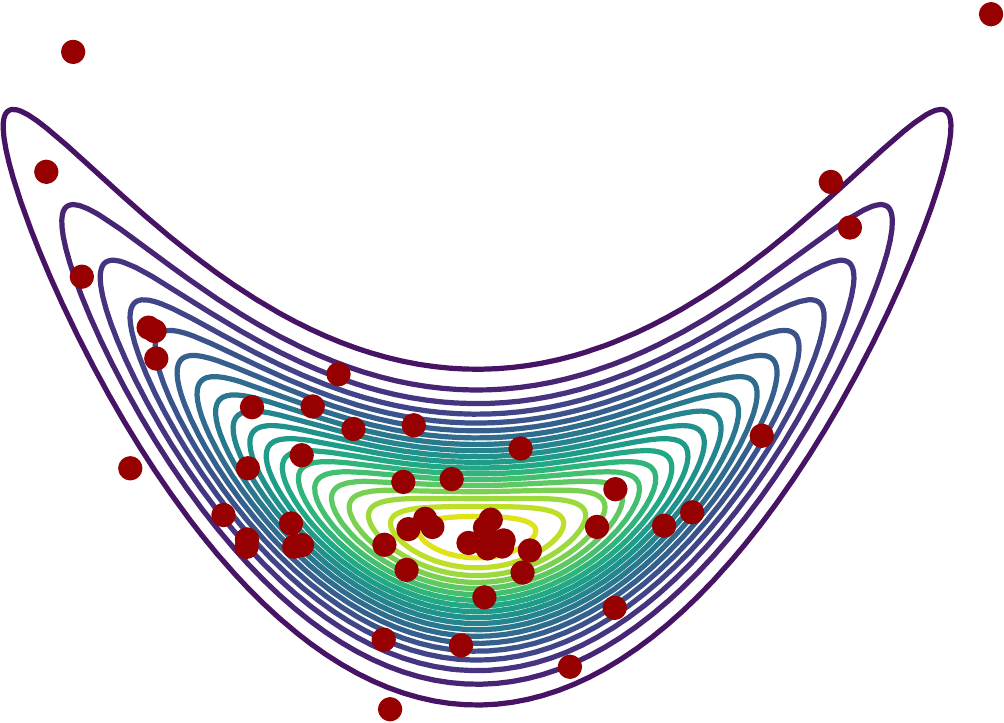}
\end{tabular}
    \caption{Coresets with 50 points for a Gaussian (top left) and the pushforward of a Gaussian through $f:(x,y)\mapsto(x,x^2+y)$. Top right is the image of the Gaussian coreset through $f$, bottom left is computed directly on the pushforward. A random sample is plotted bottom right. \vspace{-.2in}}
    \label{fig:pushcoreset}
\end{figure}


\section{Introduction}
\label{sec:intro}

How do we deal with too much data? Despite the common wisdom that more data is better, algorithms whose complexity scales with the size of the dataset are still routinely used in many areas of machine learning. While large datasets capture high frequency differences between data points, many algorithms only need a handful of \emph{representative} samples that summarize the dataset.

Formalizing a notion of \emph{representative} requires care, however, since a representative sample for a clustering algorithm may differ from that for a classification algorithm. The notion of a data \emph{coreset} was introduced to specify precisely a notion of data summarization that is task dependent. Originally proposed for computational geometry, coresets have found their way into the learning literature for tasks ranging from clustering \citep{DBLP:conf/aistats/BachemLL18}, classification \citep{DBLP:journals/jmlr/TsangKC05}, neural network compression \citep{DBLP:journals/corr/abs-1804-05345}, and Bayesian inference \citep{DBLP:conf/nips/HugginsCB16,Campbell19_JMLR}.

Coreset construction is typically posed as a discrete optimization problem: Given a fixed dataset and learning algorithm, how can we construct a smaller dataset on which that algorithm achieves similar performance? This approach, however, ignores a key theme in machine learning. A dataset is an empirical sample from an underlying data distribution, and learning problems typically seek to minimize an expected loss against the distribution, not the dataset. The effectiveness of a coreset should thus be measured against the \textit{distribution}, and not the \textit{sample}. In other words, the coreset should be designed to guarantee good generalization.

To address this oversight, we introduce \emph{measure coresets}, which approximate the dataset by either a parametric continuous measure or a finitely supported one with a smaller number of points. Our formulation extends coreset language to smooth data distributions and recovers the original formulation when the distribution is supported on finitely many points. We specifically focus on \emph{Wasserstein measure coresets}, which hinge on a natural connection between coreset language and optimal transport theory.

\paragraph*{Contributions.} 
We generalize the definition of a coreset to take into account the underlying data distribution, producing a \emph{measure coreset}, with strong generalization guarantees for a variety of learning problems. Our formulation reveals an elegant connection to optimal transport, allowing us to leverage relevant theoretical results to obtain generalization error bounds for our coresets as well as stability under Lipschitz transformations. 
From a computational perspective, we provide stochastic algorithms for extracting measure coresets, yielding methods that are well-adapted 
%
to cases involving incoming streams of data. 
This allows us to construct coresets in an online manner, without having to store the whole dataset in memory. Besides, contrarily to existing methods which are specific to a given learning problem, our formulation is robust enough so that a given coreset can be used for different tasks.

\subsection{Related work}
We join the probabilistic language of optimal transport with the discrete setting of data compression via coresets.

\textbf{Coresets.} Initially introduced in computational geometry~\citep{agarwal2005geometric}, coresets have found their way to machine learning research via importance sampling~\citep{DBLP:conf/soda/LangbergS10}. Coreset applications are varied, and generic frameworks exist for their construction~\citep{DBLP:conf/stoc/FeldmanL11}. Among the relevant recent applications are $k$-means and $k$-median clustering~\citep{har2004coresets,arthur2007k,feldman2013turning,DBLP:conf/aistats/BachemLL18}, Bayesian inference~\citep{DBLP:journals/corr/abs-1802-01737,DBLP:conf/nips/HugginsCB16}, support vector machine training~\citep{DBLP:journals/jmlr/TsangKC05}, and neural network compression~\citep{DBLP:journals/corr/abs-1804-05345}. 

While coresets are discrete, a sensitivity-based approach to importance sampling coresets was introduced in a continuous setting for approximating expectations 
under absolutely continuous measures w.r.t.\ the Lebesgue measure~\citep{DBLP:conf/soda/LangbergS10}. For more information, see \citep{DBLP:conf/aistats/BachemLL18,munteanu2018coresets}. 

Another line of work closer to ours uses the theory of Reproducing Kernel Hilbert Spaces (RKHS) to design coresets, in particular kernel herding \cite{DBLP:conf/uai/ChenWS10, DBLP:conf/aistats/Lacoste-JulienL15} and Stein points \cite{chen2018stein}. These methods also take into account the underlying distribution of the data, but both require knowledge of that distribution (e.g., the density up to a normalizing constant) while our approach simply assumes sample access.  


\textbf{Optimal transport (OT).} The connection between optimal transport and quantization can be traced back to \citet{pollard1982quantization}, who studied asymptotic properties of $k$-means in the language of OT. More recently, \citet{cuturi_fast_2014} proposed a more efficient version of transport-based quantization using entropy-regularized transport. Entropy-regularized transport~\citep{DBLP:conf/nips/Cuturi13} is a computationally efficient formulation of OT, which led to a wide range of machine learning applications; see recent surveys \citep{solomon2018optimal,peyre2018computational} for details. Recent results characterize its statistical behavior \cite{genevay2018sample} and its ability to handle noisy datasets \cite{rigollet2018entropic}, which we can leverage to design robust coresets.

Our coreset construction algorithms 
are inspired by semi-discrete methods that compute transport from a continuous measure to a discrete one using power diagrams~\citep{aurenhammer1987power}. Efficient algorithms that use computational geometry tools to perform gradient iterations to solve the Kantorovich dual problem have been introduced for 2D~\citep{merigot2011multiscale} and 3D~\citep{levy_numerical_2015}. Closer to our method are the algorithms by \citet{de2012blue} and \citet{DBLP:journals/corr/abs-1802-05757}, which solve a non-convex problem for the support of a discrete uniform measure that minimizes transport cost to an input image~\citep{de2012blue} or the barycenter of the input distributions~\citep{DBLP:journals/corr/abs-1802-05757}. Stochastic approaches for semi-discrete transport, both standard and regularized, were tackled by \citet{genevay_stochastic_2016}.

\paragraph{Notation.} In what follows, we will consider a compact metric space $\Xx \subseteq \mathbb{R}^d$ endowed with the Euclidean norm on $\mathbb{R}^d$ denoted by $\|\cdot\|$. For a random variable $X$ and a probability distribution $\mu$ on $\Xx$, we denote by $X\sim\mu$ the fact that $X$ has distribution $\mu$. The notation $\mathbb{E}_\mu(X)$ is the expectation of the random variable $X$, when $X\sim\mu$. We denote by $f \sharp \mu$ the pushforward of a measure $\mu$ by $f$. We recall that by definition, $\int_\Xx x \,\mathrm{d}(f\sharp \mu) = \int_\Xx f(x)\,\mathrm{d}(\mu)$.

\section{Coresets: From Discrete to Continuous}
\label{sec:coresets}

\subsection{Discrete coresets}
\label{sec:discrete-coresets}
A coreset is a \emph{small summary} of a data set. \emph{Small} usually refers to a the number of points in the coreset, which one hopes is much smaller than the data set size, but one can also think of this in terms of the number of bits required to store the coreset. The \emph{summary} is often a weighted subset of the data, but can also refer to points that are not in the initial dataset but rather represent the original points well.

To make these notions more precise, we must define a coreset in terms of both the dataset and the cost function that the coreset is meant to perform well against. We can understand the definition as a learning problem, where our goal is to approximate the performance of a learning algorithm on a dataset $X$ by its performance on the coreset $C$.

Let $\mathcal{F}$ be the hypothesis set for a learning problem. Every function $f \in \mathcal{F}$ maps from $X$ to $\R$. Let $\mu_X$ be a weighting function on the points in $X$ (this is typically uniform), and define the cost of $f$ on $(X, \mu_X)$ as
\begin{align}
    \mathrm{cost}(X, \mu_X, f) = \sum_{x\in X} \mu_X(x) f(x). \label{eq:cost}
\end{align}
A coreset is then defined by a set $C$ and a weight function $\mu_C$ in such a way that $\mathrm{cost}(C, \mu_C, f)$ is close to $\mathrm{cost}(X, \mu_X, f)$. This leads to the following classical definition of a coreset \cite{bachem2017practical}:

\begin{definition}[Strong/weak $\varepsilon$-coreset]
  \label{def:dcoreset}
The pair $(C, \mu_C)$ is a \emph{strong} $\varepsilon$-coreset for the function family $\mathcal{F}$ if $C \subseteq X$ and
$ 
  \left| \mathrm{cost}(X,\mu_X, f) -  \mathrm{cost}(C,\mu_C, f)\right| \leq \varepsilon\cdot \mathrm{cost}(X,\mu_X, f)
$
for all $f \in \mathcal{F}$. 
If we require that the inequality only holds at $f^* = \argmin_{f\in \mathcal{F}} \mathrm{cost}(X,\mu_X,f)$, then we call $(C, \mu_C)$ a \emph{weak} $\varepsilon$-coreset.
\end{definition}


A coreset always exists for a dataset $(X, \mu_X)$ and 
family $\mathcal{F}$ as the original dataset $(X, \mu_X)$ satisfies Definition \ref{def:dcoreset}.

What distinguishes coresets from other notions of data sparsification is their dependence on the learning problem. For instance, there exist coresets for clustering \cite{DBLP:conf/kdd/BachemL018, DBLP:conf/aistats/BachemLL18}, Bayesian inference \cite{Campbell19_JMLR}, and classification \cite{DBLP:journals/corr/abs-1708-03835}.

\paragraph{Example ($k$-means).} The cost of a particular choice $Q$ of $k$ centers is given by $\sum_{x\in X} \min_{q\in Q} \|x - q\|^2$. To translate this into the language of Definition \ref{def:dcoreset}, we take $f_Q(x) = \min_{q\in Q} \|x - q\|^2$ and $\mu_X(x) = 1$ for all $x\in X$. The function family $\mathcal{F}$ is thus parameterized by the set of all possible choices of the center set $Q$, and we wish to construct a coreset that performs well against all such choices (in the case of a strong coreset) or against the optimal $k$-means assignment (in the case of a weak coreset).

\subsection{Measure coresets}
\label{sec:measure-coresets}

So far we have used discrete language to describe coresets, but this belies the intent of coresets for learning problems. Typical learning problems are posed as minimizations in a hypothesis class of an \emph{expectation} over a data distribution $\mu$. The standard coreset definition is incompatible with this setting as it relies on the existence of a finite data set.
To circumvent this issue, we define a \emph{measure coreset} as a measure $\nu$ that produces similar results under $\mathcal{F}$ as $\mu$: 
\begin{definition}[Measure Coreset]
  \label{def:cmcoreset}
  We call $\nu$ a \emph{strong} $\varepsilon$-measure coreset for $\mu$ if for all $f \in \mathcal{F}$
  \begin{align}
    \label{eq:mcorecost}
    \left|\mathbb{E}_{\mu}[f(X)] - \mathbb{E}_{\nu}[f(X)]\right| \leq \varepsilon.
  \end{align}
\end{definition}
In analogy to the discrete case, a \emph{weak} $\varepsilon$-measure coreset is one for which the inequality holds at $f^* = \argmin_{f\in\mathcal{F}}\mathbb{E}_{\mu}[f(X)]$.
As in the case of discrete coresets, such a $\nu$ always exists, as $\nu = \mu$ satisfies the inequality.

Beyond the change to measure theoretic language, our definition differs from the typical coreset one in two ways. (1) The coreset $\nu$ can be an absolutely continuous measure, which means the size of the coreset can no longer be measured simply in the number of points. (2) We use absolute error instead of relative error; this connects our notion of coreset with generalization error in learning problems in that we can see the coreset as \emph{observed} data and the full measure as \emph{out of sample} data. Absolute instead of relative error is uncommon in coreset language, but not unheard of; see \cite{reddi2015communication, DBLP:conf/kdd/BachemL018} for examples.

Under which constraints on $\nu$, $\mu$ and $\mathcal{F}$ can we construct a measure coreset? We will show a connection to optimal transport and a resulting construction algorithm that aims at minimizing a Wasserstein distance between the coreset $\nu$ and the target measure $\mu$. Using optimal transport duality, we can qualify which learning problems admit measure coresets and the guarantees we can hope to achieve.

\section{Sufficient Conditions for Coreset Approximation}\label{sec:coresetconstruction}
The link between our measure coreset formulation and the theory of optimal transport uses the notion of integral probability metrics \cite{muller1997integral}:
\begin{definition}[Integral Probability Metric] Consider a class of functions $\mathcal{F}:\Xx\to\R$. The integral probability metric $d_\mathcal{F}$ between two measures $\mu$ and $\nu$ is defined by
 \begin{equation}
       d_\mathcal{F}(\mu, \nu) = \sup_{f \in \mathcal{F}} \left|\mathbb{E}_{ \mu}[f(X)] - \mathbb{E}_{ \nu}[f(X)]\right|. \label{eq:IPM}
 \end{equation}
 \end{definition}
Under mild assumptions on the set of functions $\mathcal{F}$, $d_\mathcal{F}$ defines a distance on the space of probability measures. We mention the following examples:
\begin{itemize}
  \item 1-Wasserstein Distance:
  $\mathcal{F} = \{f \ |\ \|\nabla f\| \leq 1\}$ the space of 1-Lipschitz functions.
  \item Dual-Sobolev distance: $\mathcal{F} = \{f \ |\ \|f\|_{H^1(\mu)} \leq 1\}$ where $H^1$ is the Sobolev space $\{f \in L^2 \mid \partial_{x_i} f \in L^2\}$.
  \item Maximum Mean Discrepancy (MMD) \cite{DBLP:journals/corr/abs-0805-2368}:  $\mathcal{F} = \{f \ |\ \|f\|_{\mathcal{H}} \leq 1\}$ where  $\mathcal{H}$ is a universal Reproducing Kernel Hilbert Space (RKHS).
\end{itemize}

The examples above allow us to derive a coreset condition for each of these function classes based on the Wasserstein distance or the MMD, explored in detail below.

\paragraph{Wasserstein distances.}
The $p$-Wasserstein distance between distributions $\mu$ and $\nu$ is given by the solution of a minimization problem:
\begin{equation}
  \label{eq:wp}
  W_p^p(\mu, \nu) = \inf_{\pi\in \Pi(\mu, \nu)} \int_{\Xx \times \Xx}  \|x - y\|^p \,\mathrm{d}\pi(x, y),
\end{equation}
where $\Pi(\mu, \nu) = \{\pi\in P(\Xx\times \Xx)\ |\ \pi(\mathrm{d}x\times \Xx) = \mu(\mathrm{d}x), \pi(\Xx\times \mathrm{d}y) = \nu(\mathrm{d}y)\}$ is the set of couplings with marginals $\mu$ and $\nu$.

When $p = 1$, $W_1(\mu, \nu)$ can be rewritten via duality as a maximization problem over the set of $1$-Lipschitz functions \citep[\S3.1]{santambrogio_optimal_2015}. In particular, for $\mathcal{F} = \mathrm{Lip}_1(\Xx)$, 
\begin{align*}
  d_\mathcal{F}(\mu, \nu) = \sup_{f\in \mathrm{Lip}_1}\int_\Xx f\,\mathrm{d}(\mu-\nu) = W_1(\mu, \nu).
\end{align*}

When $p = 2$,  $W_2(\mu, \nu)$ upper-bounds the dual Sobolev norm of $(\mu - \nu)$ if $\mu$ and $\nu$ have densities w.r.t\ the Lebesgue measure that are bounded above by some constant $M$. In particular, for any $C^1$ function $f$, define a semi-norm by
\begin{align*}
  \|f\|_{H^1(\mu)} = \left(\int_\Xx |\nabla f(x)|^2\,\mathrm{d}\mu(x) \right)^{\frac{1}{2}}.
\end{align*}
This norm allows us to define a dual Sobolev norm on measures as
\begin{align*}
  \|\nu\|_{H^{-1}(\mu)} = \sup_{\|f\|_{H^1(\mu)} \leq 1} \int_{\Xx} f(x)\,\mathrm{d}\nu(x).
\end{align*}
Using \citep[Equation (17)]{peyre2018comparison}, we obtain that for $\mathcal{F} = \{f \ |\ \|f\|_{H^1(\mu)} \leq 1\}$ :
$$d_\mathcal{F}(\mu, \nu) = \|\mu - \nu\|_{H^{-1}(\mu)} \leq \sqrt{M} W_2(\mu, \nu),$$
where $M$ is the uniform bound on the densities of $\mu$ and $\nu$.

\paragraph{Maximum mean discrepancy.}
When $\mathcal{F}$ is the unit ball of a RKHS, equation \eqref{eq:IPM} defines a distance function known as the \emph{maximum mean discrepancy} \citep{DBLP:journals/corr/abs-0805-2368}. If $\kappa(\cdot, \cdot)$ is the reproducing kernel of the RKHS, we can rewrite \eqref{eq:IPM} as an expectation over kernel evaluations
\begin{align}
  \mathrm{MMD}(\mu, \nu) =& \mathbb{E}_{\mu \otimes \mu} [\kappa(X, X')] + \mathbb{E}_{\nu \otimes \nu}[\kappa(Y,Y')] \nonumber \\ & - 2\mathbb{E}_{\mu \otimes \nu}[\kappa(X, Y)].   \label{eq:mmd-rkhs}
\end{align}

While our focus is on coresets under the Wasserstein distance, we mention that coresets that minimize the MMD have been constructed for kernel density estimation \cite{DBLP:conf/compgeom/PhillipsT18}. Generic construction algorithms for sampling to minimize $\MMD$ to a known fixed measure---known as \emph{kernel herding}---have been given by \citet{DBLP:conf/uai/ChenWS10} and \citet{DBLP:conf/aistats/Lacoste-JulienL15}.


\paragraph{Coreset condition.} Using the properties of IPMs above, we summarize conditions for $\nu$ to be an $\varepsilon$-coreset for $\mu$ based on conditions on $\mathcal{F}$.
\begin{proposition}\label{prop:coreset-cond}
  The measure $\nu$ is an $\varepsilon$-coreset for $\mu$ with function family $\mathcal{F}$ if: \begin{enumerate}[label=(\roman*)]
      \item $W_1(\mu, \nu) \leq \varepsilon$ for $\mathcal{F} \subseteq \mathrm{Lip}_1$;
      \item $W_2(\mu, \nu) \leq \nicefrac{\varepsilon}{\sqrt{M}}$ for $\mathcal{F} \subseteq H^1(\mu)$, when $\mu$ and $\nu$ have densities with respect to the Lebesgue measure that are bounded above by $M$; or
      \item $\MMD(\mu, \nu) \leq \varepsilon$ for $\mathcal{F} \subseteq \mathcal{H}$.
  \end{enumerate}
\end{proposition}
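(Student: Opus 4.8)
The plan is to observe that Definition~\ref{def:cmcoreset} is precisely the statement $d_\mathcal{F}(\mu,\nu) \le \varepsilon$, where $d_\mathcal{F}$ is the integral probability metric of \eqref{eq:IPM}, and then to bound $d_\mathcal{F}$ from above by the relevant transport/MMD quantity in each of the three cases. The only structural fact I would isolate first is the elementary monotonicity of IPMs: if $\mathcal{F} \subseteq \mathcal{G}$ then $d_\mathcal{F}(\mu,\nu) \le d_\mathcal{G}(\mu,\nu)$, since a supremum over a smaller family is no larger. With this in hand each case is a one-line chain of inequalities.

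For case (i), I would apply monotonicity with $\mathcal{G} = \mathrm{Lip}_1(\Xx)$, use the Kantorovich--Rubinstein duality $d_{\mathrm{Lip}_1}(\mu,\nu) = W_1(\mu,\nu)$ recalled above, and conclude $d_\mathcal{F}(\mu,\nu) \le W_1(\mu,\nu) \le \varepsilon$, so $\nu$ satisfies \eqref{eq:mcorecost}. For case (ii), I would take $\mathcal{G} = \{f : \|f\|_{H^1(\mu)} \le 1\}$, use the identity $d_\mathcal{G}(\mu,\nu) = \|\mu-\nu\|_{H^{-1}(\mu)}$ together with the bound $\|\mu-\nu\|_{H^{-1}(\mu)} \le \sqrt{M}\,W_2(\mu,\nu)$ from \citep[Eq.~(17)]{peyre2018comparison} (valid since both densities are bounded above by $M$), and then substitute the hypothesis $W_2(\mu,\nu) \le \varepsilon/\sqrt{M}$ to get $d_\mathcal{F}(\mu,\nu) \le \varepsilon$. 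For case (iii), I would take $\mathcal{G} = \mathcal{H}$ the unit ball of the RKHS, so that $d_\mathcal{G}(\mu,\nu) = \mathrm{MMD}(\mu,\nu)$ by \eqref{eq:IPM}--\eqref{eq:mmd-rkhs}, and the hypothesis $\mathrm{MMD}(\mu,\nu) \le \varepsilon$ finishes it.

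There is no real obstacle here: the proposition is a bookkeeping consequence of the IPM identities established earlier in Section~\ref{sec:coresetconstruction}. The one point worth stating explicitly is that "$\mathcal{F} \subseteq H^1(\mu)$'' and "$\mathcal{F} \subseteq \mathcal{H}$'' in the proposition should be read as inclusion in the respective \emph{unit balls}, so that the duality identities apply verbatim; I would add a half-sentence clarifying this so the monotonicity step is unambiguous. Everything else is immediate.
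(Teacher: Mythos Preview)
Your proposal is correct and matches the paper's approach exactly: the paper does not give a separate proof but presents the proposition as a direct summary of the IPM identities and the $H^{-1}$--$W_2$ bound established in the preceding paragraphs, which is precisely the chain of inequalities you spell out. Your explicit mention of IPM monotonicity under $\mathcal{F}\subseteq\mathcal{G}$ and the clarification that the inclusions should be read as inclusions in the respective unit balls are both worthwhile additions that the paper leaves implicit.
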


We can extend the first two conditions to $\mathrm{Lip}_K$ and $\|f\|_{H^1(\mu)} \leq K$ by scaling $f$ by the Lipschitz or Sobolev constant by a multiplicative $K$ factor. In the remainder of this paper, we will focus on coresets based on Wasserstein distances and will call them \emph{measure coresets} for simplicity. When more precision is required, we will denote by $W_1$ (resp.\ $W_2$, $\MMD$) measure coreset a coreset with function family $\mathrm{Lip}_1$ (resp.\ $H^1(\mu)$, $\mathcal{H}$).

\section{Practical Wasserstein Coreset Constructions}

While \S\ref{sec:coresetconstruction} gives a metric for measuring how close a distribution $\nu$ is to satisfying the coreset condition for a distribution $\mu$, the question of how to compute such a $\nu$ remains.

In our definition, $\nu$ was unconstrained, but for it to be a useful 
coreset for a measure, we should be able to describe it using fewer bits than needed to describe the full measure $\mu$. From a practical point of view, we should also be able to compute expectations under the coreset $\nu$ and at least approximate expectations under $\mu$.

We make a few simplifications. We assume that we can sample from $\mu$ efficiently and that $\mu$ is supported on a compact set $\mathcal{X} \subset \R^d$. This is true of any finite dataset. The simplest notion of a measure coreset is a uniform distribution over a finite point set $x_1, \ldots, x_n$. This leads to the following optimization problem, which will be our focus in this section:
  \begin{equation}
          \min_{(x_1,\dots,x_n)} W_p\left(\frac{1}{n}\sum_{i=1}^n \delta_{x_i}, \mu\right) \tag{$\mathcal{P}$}. \label{eq:csproblem}
  \end{equation}

It is also possible to formulate the problem using a continuous parametric density as a coreset. Given a family of parametric densities $(p_\theta)_{\theta \in \Theta}$ (e.g., Gaussian), we want to find  the parametric distribution $p_{\*\theta}$ that best approximates a measure $\mu$. This can be written simply as 
  \begin{equation}
          \min_{\theta \in \Theta} W_p\left(p_\theta, \mu\right).
  \end{equation}
We experimented with this option using Gaussian mixtures, but the minimization is highly non-convex, and gradient descent algorithms do not converge except in restricted settings (e.g., mixtures with equal weights). We find the simpler problem \eqref{eq:csproblem} sufficient for the applications we consider and leave computation of more general coresets to future research.

\subsection{Properties of empirical coresets}
We address the problem of estimating $n$ the number of points in a coreset $n$ given $\varepsilon$ for $\mu$ an arbitratry measure continuous. Namely, we ask how many samples $n$ we need such that $W_p\left(\mu, \nu\right) \leq \varepsilon$ when $\nu = \sum_{i=1}^n \delta_{x_i}$. 

\textbf{Statistical bounds.}
There exist several theorems for finite sample rates of $W_p$, 
which each focus on specific hypotheses to marginally improve rates. We give a general statement: 
\begin{theorem}[Metric convergence, \citealt{kloeckner_approximation_2012,brancolini2009long,weed2017sharp}]
  \label{thm:convergence}
  Suppose $\mu$ is a compactly supported measure in $\mathbb{R}^d$ and $\nu_n$ is a uniform measure supported on $n$ points drawn from $\mu$. Then $W_p(\nu_n, \mu) \sim \Theta(n^{-\nicefrac{1}{d}})$. Moreover, if $\mu$ has Hausdorff dimension $s < d$, then $W_p(\nu_n, \mu) \sim \Theta(n^{-\nicefrac{1}{s}})$.
\end{theorem}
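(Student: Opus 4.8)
I would derive both claimed rates from a single multiscale (dyadic chaining) scheme, after two reductions. Since $\mu$ is compactly supported we may rescale so that $\Xx \subseteq [0,1]^d$, and since on a bounded set $W_p \le \mathrm{diam}(\Xx)^{1-1/p}\,W_1^{1/p}$ while $W_1 \le W_p$, it suffices to prove the rates for $W_1$ and transfer them to all $p$. Moreover I would first bound $\mathbb{E}[W_1(\nu_n,\mu)]$ and then upgrade to an almost-sure / high-probability statement by a bounded-differences argument: moving one of the $n$ sample points changes $\nu_n$ by mass $1/n$ and hence changes $W_1(\nu_n,\mu)$ by at most $\mathrm{diam}(\Xx)/n$, so $W_1(\nu_n,\mu)$ concentrates around its mean at scale $n^{-1/2}$, which is of lower order than $n^{-1/d}$ (and than $n^{-1/s}$) in the regime where the statement is nontrivial.

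\textbf{Upper bound.} For $k\ge 0$, partition $[0,1]^d$ into the $2^{kd}$ dyadic cubes $Q_{k,j}$ of side $2^{-k}$ and let $N_k$ count those meeting $\mathrm{supp}\,\mu$. A hierarchical transport construction (couple $\nu_n$ to $\mu$ cube by cube, descending the dyadic tree) gives the deterministic bound
\begin{equation}
  W_1(\nu_n,\mu) \;\le\; C_d \sum_{k\ge 0} 2^{-k} \sum_{j} \bigl| \nu_n(Q_{k,j}) - \mu(Q_{k,j}) \bigr|. \label{eq:dyadicbound}
\end{equation}
Since $\nu_n$ is an $n$-sample, $n\,\nu_n(Q_{k,j})$ is $\mathrm{Binomial}(n,\mu(Q_{k,j}))$, so $\mathbb{E}\,\bigl|\nu_n(Q_{k,j})-\mu(Q_{k,j})\bigr| \le \min\bigl(2\mu(Q_{k,j}),\ \sqrt{\mu(Q_{k,j})/n}\bigr)$. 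Substituting into \eqref{eq:dyadicbound}, using $\sum_j \mu(Q_{k,j})=1$, bounding $\sum_j \sqrt{\mu(Q_{k,j})/n}\le\sqrt{N_k/n}$ by Cauchy--Schwarz, and splitting the sum over scales at the level $k^\star$ where $N_{k^\star}\approx n$, yields $\mathbb{E}[W_1(\nu_n,\mu)]\le C\,n^{-1/d}$ whenever $N_k\le C'2^{kd}$ (the generic full-dimensional case), with an extra logarithmic factor in the low dimensions that the sharp analyses of \citet{weed2017sharp} remove. If instead $\mathrm{supp}\,\mu$ has dimension $s$ (in the sense that $N_k\le C'2^{ks}$ for large $k$), the same split, now at $2^{k^\star}\approx n^{1/s}$, gives $\mathbb{E}[W_1(\nu_n,\mu)]\le C\,n^{-1/s}$.

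\textbf{Lower bound.} For any configuration $W_1(\nu_n,\mu)\ge\inf_{|S|=n} W_1\bigl(\frac{1}{n}\sum_{x\in S}\delta_x,\ \mu\bigr)$, so it suffices to lower bound the $n$-point quantization error of $\mu$. Assume $\mu$ has a component of density bounded above by $M$ on a ball $B_0$ with $\mu(B_0)=m_0>0$ (the generic full-dimensional situation). For any $n$ points, the balls $B(x_i,r)$ meet $B_0$ in Lebesgue measure at most $c_d\,n\,r^d$, hence carry $\mu$-mass at most $M c_d\,n\,r^d$, so choosing $r$ a small constant times $n^{-1/d}$ leaves $\mu$-mass $\ge m_0/2$ in $B_0$ at distance $\ge r$ from every $x_i$; therefore $W_1(\nu_n,\mu)\ge (m_0/2)\,r\ge c\,n^{-1/d}$. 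The $s$-dimensional case is identical with Lebesgue measure replaced by $\mathcal{H}^s$ and $B_0$ contained in an Ahlfors $s$-regular piece of $\mathrm{supp}\,\mu$, so that a ball of radius $r$ carries mass $O(r^s)$.

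\textbf{Main obstacle.} The delicate part is making the two bounds meet. The upper bound is clean for $d\ge 3$, where the geometric series over scales is dominated by its endpoint, but it genuinely loses a logarithm in $d=2$ and the $n^{-1/d}$ normalization is natural only for the optimal-quantization reading used by \citet{kloeckner_approximation_2012,brancolini2009long}; removing the logarithm and pinning constants is exactly the content of \citet{weed2017sharp}. On the lower-bound side, a bare Hausdorff-dimension hypothesis is not quantitative enough---without a uniform upper density bound (full-dimensional case) or Ahlfors $s$-regularity one only gets statements of the form $W_p = o(n^{-1/(s+\varepsilon)})$---so the honest route is to impose such regularity or to phrase the result through the ``upper Wasserstein dimension'' of \citet{weed2017sharp}, from which the stated rates follow directly.
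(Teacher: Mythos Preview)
The paper does not supply a proof of this theorem; it is quoted from the cited sources \citep{kloeckner_approximation_2012,brancolini2009long,weed2017sharp} and used as a black box, so there is no in-paper argument to compare against. Your sketch is essentially the strategy of \citet{weed2017sharp}: dyadic chaining plus Cauchy--Schwarz over occupied cubes for the upper bound, and a covering/packing quantization argument for the lower bound. You also correctly flag the two honest subtleties---the logarithm in dimension~$2$ and the fact that a bare Hausdorff-dimension hypothesis must be strengthened to Ahlfors regularity or the Weed--Bach ``Wasserstein dimension'' to make the matching lower bound go through.

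One genuine slip: your reduction ``prove the rate for $W_1$ and transfer to all $p$'' does not deliver the stated upper bound. From $W_p \le \mathrm{diam}(\Xx)^{1-1/p}\,W_1^{1/p}$ and $W_1 \lesssim n^{-1/d}$ you only obtain $W_p \lesssim n^{-1/(pd)}$, which is strictly weaker than $n^{-1/d}$ for $p>1$. The lower-bound direction $W_1 \le W_p$ is fine, but for the upper bound you must run the multiscale coupling directly on $\|x-y\|^p$: the same hierarchical transport controls $\int \|x-y\|^p\,\mathrm{d}\pi$ scale by scale (each level~$k$ contributes $O(2^{-kp})$ times the total variation at that scale), and the Cauchy--Schwarz step over occupied cubes is unchanged. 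This is exactly how \citet{weed2017sharp} obtain the $p$-independent exponent; with that correction your outline matches the literature the paper cites.
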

Thus, both $W_1$ and $W_2$ have finite sample rate $O(n^{-\nicefrac{1}{d}})$. If we assume that $\mu$ is supported on a lower dimensional manifold 
of dimension $s$, we get the improved rate $O(n^{-\nicefrac{1}{s}})$. 

\begin{corollary} If $\nu = \sum_{i=1}^n \delta_{\*x_i}$ with $n = \Theta(\varepsilon^{-s})$ is a globally optimal solution for~\eqref{eq:csproblem}, then $\nu$ is a $\varepsilon$-measure coreset.
\end{corollary}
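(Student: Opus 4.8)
The plan is to chain together three facts: the global optimality of $\nu$ for \eqref{eq:csproblem}, the metric sampling rate of Theorem~\ref{thm:convergence}, and the coreset condition of Proposition~\ref{prop:coreset-cond}.

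First I would upper-bound the optimal value of \eqref{eq:csproblem}. Let $V_n^\star := \min_{(x_1,\dots,x_n)} W_p\!\left(\frac1n\sum_{i=1}^n\delta_{x_i},\mu\right)$, which by hypothesis equals $W_p(\nu,\mu)$. Drawing $X_1,\dots,X_n$ i.i.d.\ from $\mu$ produces a feasible configuration, so $V_n^\star \le W_p\!\left(\frac1n\sum_i\delta_{X_i},\mu\right)$ for every realization; using Theorem~\ref{thm:convergence} (with $s$ the Hausdorff dimension of $\mathrm{supp}\,\mu$, and $s=d$ in the full-dimensional case), the right-hand side is $O(n^{-1/s})$ in expectation, hence $V_n^\star \le C\,n^{-1/s}$ for a constant $C=C(\mu,d)$.

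Next I would substitute $n = \Theta(\varepsilon^{-s})$: writing $n \ge c\,\varepsilon^{-s}$, we get $W_p(\nu,\mu) = V_n^\star \le C c^{-1/s}\varepsilon$. Since the constant $c$ hidden in $n=\Theta(\varepsilon^{-s})$ is ours to choose, we take it large enough that $C c^{-1/s}\le 1$ in the $p=1$ case (resp.\ $\le 1/\sqrt{M}$ in the $p=2$ case, which just absorbs an extra $M^{s/2}$ factor into $c$), so that $W_1(\nu,\mu)\le\varepsilon$ (resp.\ $W_2(\nu,\mu)\le \varepsilon/\sqrt{M}$). Proposition~\ref{prop:coreset-cond}(i) (resp.\ (ii)) then immediately gives that $\nu$ is an $\varepsilon$-measure coreset for $\mathcal{F}\subseteq\mathrm{Lip}_1$ (resp.\ $\mathcal{F}\subseteq H^1(\mu)$).

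The main obstacle — really a matter of care rather than difficulty — is the precise meaning of the $\Theta$ in Theorem~\ref{thm:convergence}: it is an asymptotic statement, possibly with a $\mu$-dependent constant that degrades for measures concentrated near a lower-dimensional set, so the conclusion should be read as holding for $\varepsilon$ sufficiently small and with the constant in $n=\Theta(\varepsilon^{-s})$ chosen to dominate $C^{s}$ (times $M^{s/2}$ in the $W_2$ case). A secondary caveat is that the $p=2$/$H^1(\mu)$ branch of Proposition~\ref{prop:coreset-cond} asks $\nu$ to have a bounded Lebesgue density, which an atomic $\nu$ does not; so the clean statement is really for the $W_1$ coreset, and the $W_2$ version would need either a smoothed coreset or a restriction of $\mathcal{F}$ that sidesteps the density hypothesis.
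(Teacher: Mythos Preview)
Your proposal is correct and matches the paper's reasoning: the paper states the corollary without proof, treating it as an immediate consequence of Theorem~\ref{thm:convergence} combined with Proposition~\ref{prop:coreset-cond}, which is exactly the chain you spell out. Your extra observations --- that global optimality lets you bound $V_n^\star$ by the random-sample rate, that the $\Theta$-constant must absorb the implied constant in the convergence theorem, and that the $p=2$ branch of Proposition~\ref{prop:coreset-cond} is technically inapplicable to an atomic $\nu$ --- are all valid and make explicit caveats the paper glosses over.
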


 While we cannot guarantee this bound in practice since global optimality is NP-hard \citep{DBLP:journals/corr/abs-1802-05757}, empirically we observe that it holds and in fact is an overestimate of coreset size. Note that the theoretically required coreset size is independent of additional variables in the underlying problem, e.g., the number of means in $k$-means.

This bound improves over the best known deterministic coreset size for $k$-means and $k$-median of $O(k\varepsilon^{-d}\log n)$ \citep{har2004coresets}, but we must be careful as our coreset bounds are given in absolute error. For $k$-means and $k$-medians, we are typically in the regime where the full data set has large cost \eqref{eq:cost}, but if that does not hold, the coresets are no longer comparable. 

Better randomized construction algorithms exist for both $k$-means/$k$-median and SVM with sizes that do not have such a strong dependence on dimension. Empirically, our coresets are competitive, and often better than specialized construction algorithms, especially in the small data regime (see Figures \ref{fig:svm}, \ref{fig:kmeans} and \ref{fig:bayesian}).

One useful property of $W_p$ coresets is that given an $\varepsilon-$coreset for a reference measure $\mu$, we immediately have a $L\varepsilon-$coreset for the pushforward measure $f\sharp \mu$, where $L$ is the Lipschitz constant of $f$.

\newpage
\begin{proposition}(Coreset of pushforward measure) Consider a $L$-Lipschitz function $f$. If $\{\*x_i\}_{i=1}^n$ is a $\varepsilon$-measure coreset under $W_p$ for $\mu$, then $\{f(\*x_i)\}_{i=1}^n$ is a $L \varepsilon$-measure coreset under $W_p$ for $f \sharp \mu$.
\end{proposition}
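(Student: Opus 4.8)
The plan is to reduce the statement to one standard fact: push-forward by an $L$-Lipschitz map is $L$-Lipschitz for the $p$-Wasserstein distance, i.e. $W_p(f\sharp\mu, f\sharp\nu)\le L\,W_p(\mu,\nu)$ for any probability measures $\mu,\nu$ on $\Xx$. To establish this I would take an arbitrary coupling $\pi\in\Pi(\mu,\nu)$, push it forward by the map $(x,y)\mapsto (f(x),f(y))$, and observe that $(f\times f)\sharp\pi$ lies in $\Pi(f\sharp\mu,f\sharp\nu)$: its first marginal is $(\mathrm{proj}_1)\sharp\big((f\times f)\sharp\pi\big) = f\sharp\big((\mathrm{proj}_1)\sharp\pi\big) = f\sharp\mu$ by associativity of push-forward, and similarly for the second marginal. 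The change-of-variables identity for push-forwards then gives
\begin{align*}
\int \|u-v\|^p\,\mathrm{d}\big((f\times f)\sharp\pi\big)(u,v) = \int \|f(x)-f(y)\|^p\,\mathrm{d}\pi(x,y) \le L^p \int \|x-y\|^p\,\mathrm{d}\pi(x,y),
\end{align*}
using the pointwise Lipschitz bound $\|f(x)-f(y)\|\le L\|x-y\|$. Taking the infimum over $\pi$ yields $W_p^p(f\sharp\mu,f\sharp\nu)\le L^p\,W_p^p(\mu,\nu)$, i.e. the claimed contraction after taking $p$-th roots.

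Next I would specialize to $\nu=\tfrac1n\sum_{i=1}^n\delta_{x_i}$. Since push-forward commutes with finite sums of Diracs, $f\sharp\nu=\tfrac1n\sum_{i=1}^n\delta_{f(x_i)}$, so the proposed coreset $\{f(x_i)\}_{i=1}^n$ is exactly the (multi)support of $f\sharp\nu$. By hypothesis $\{x_i\}_{i=1}^n$ is an $\varepsilon$-measure coreset under $W_p$ for $\mu$, which via Proposition~\ref{prop:coreset-cond} corresponds to the Wasserstein bound $W_p(\mu,\nu)\le\varepsilon$ (for $p=1$ this is an equivalence since the IPM over $\mathrm{Lip}_1$ is exactly $W_1$). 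Combining with the contraction step gives $W_p(f\sharp\mu, f\sharp\nu)\le L\,W_p(\mu,\nu)\le L\varepsilon$, and invoking Proposition~\ref{prop:coreset-cond} in the other direction certifies that $\{f(x_i)\}_{i=1}^n$ is an $L\varepsilon$-measure coreset for $f\sharp\mu$.

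There is no deep obstacle here; the only care needed is bookkeeping. The marginal check for $(f\times f)\sharp\pi$ is the main place one can slip, but it is immediate from associativity of $\sharp$. The one genuine caveat is that the $p=2$ / dual-Sobolev version of the coreset condition, Proposition~\ref{prop:coreset-cond}(ii), additionally requires a uniform density bound $M$, and $f\sharp\mu$ need not be absolutely continuous even when $\mu$ is; hence the cleanest proof is phrased for the $W_1$ coreset, with the remark that cases (ii)–(iii) carry over verbatim whenever the corresponding structural hypothesis is preserved by $f$. I would also note the factor $L$ cannot be improved in general — take $f$ linear with operator norm $L$ — so the bound is tight.
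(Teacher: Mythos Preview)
Your argument is correct and follows essentially the same route as the paper: both establish the Lipschitz contraction $W_p(f\sharp\mu,f\sharp\nu)\le L\,W_p(\mu,\nu)$ via the pointwise bound $\|f(x)-f(y)\|^p\le L^p\|x-y\|^p$ and then specialize to the empirical measure. Your explicit construction of the coupling $(f\times f)\sharp\pi$ and verification of its marginals is in fact a cleaner way to justify the step the paper glosses over when it says ``using the definition of a pushforward measure on the left''; the added remarks on tightness and the $p=2$ density caveat are correct but extraneous to what the paper claims.
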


\begin{proof}
$f$ being $L$-Lipschitz implies that
$\|f(x) - f(y) \|^p \leq L^p \|x - y \|^p \quad \forall (x,y) \in \Xx.$ 
Thus, for all 
$\pi \in \Pi(\frac{1}{n}\delta \*x_i, \mu)$,
\begin{align*}
 \int_\Xx \sum_{i=1}^n & \|f(\*x_i) - f(x) \|^p\, d\pi(x_i,x) \\
 &\leq L^p \int_\Xx \sum_{i=1}^n \|\*x_i - x \|^p\,d\pi(x_i,x).
 \end{align*}
Minimizing over $\pi$ on the right hand side and using the definition of a pushforward measure on the left gives
$$
     W_p^p \left(\frac{1}{n} \sum_{i=1}^n \delta_{f(\*x_i)}, f \sharp \mu \right)  \leq L^p W_p^p \left(\frac{1}{n} \sum_{i=1}^n \delta_{\*x_i}, \mu \right).$$
Since $\*x_i$ is a $W_p$ $\varepsilon$-measure coreset for $\mu$, we have $W_p \left(\frac{1}{n} \sum_{i=1}^n \delta_{\*x_i}, \mu \right) \leq\varepsilon$, yielding the desired bound.
\end{proof}

Pushforward measures are ubiquitous in (deep) generative models, which have gained popularity for image generation through GANs \cite{GAN} and VAEs \cite{VAE}. Specifically, new data is generated by pushing uniform or Gaussian noise through a neural network $f$ \cite{genevay2018learning}. The above proposition suggests that if the pushforward function $f$ has bounded variation, constructing a coreset for the source noise and pushing it through $f$ is sufficient to find a `good enough' coreset for the generative model without additional computations. 
This robustness property is illustrated by Figure~\ref{fig:pushcoreset}, where the banana-shaped distribution is the pushforward of a normalized Gaussian $\mathcal{N}$ through $f:(x,y)\mapsto(x,x^2+y)$. Even though the coreset obtained as the image of the coreset of the Gaussian through $f$ performs slightly worse than the coreset computed directly on $f\sharp\mathcal{N}$, it represents the distribution in a more faithful way than a random sample.

We also have the following relationship between being a $W_2$ coreset and being a $W_1$ coreset:
 \begin{remark}Let $\{\*x_i\}_{i=1}^n$ minimize $W_2\left(\frac{1}{n}\sum_{i=1}^n \delta_{\*x_i}, \mu\right)$. Using the inequality between $W_p$ metrics, 
$$ W_1\left( \frac{1}{n}\sum_{i=1}^n \delta_{\*x_i}, \mu \right)  \leq W_2\left( \frac{1}{n}\sum_{i=1}^n \delta_{\*x_i} \mu \right).$$
Thus, if we choose $n$ large enough such that $W_2\left(\frac{1}{n}\sum_{i=1}^n \delta_{\*x_i}, \mu\right) \leq \varepsilon$, then  $\frac{1}{n}\sum_{i=1}^n \delta_{\*x_i}$ is also a $W_1$ $\varepsilon$-measure coreset for $\mu$.
\end{remark}

\subsection{Entropy-regularized Wasserstein distances} \label{sec:reg-was}
The entropy-regularized Wasserstein distance is a popular approximation of the Wasserstein distance, as it is computable with faster algorithms \citep{cuturi2013sinkhorn}. The entropically regularized $p$-Wasserstein distance is
\begin{align}
\begin{split}
  W_{p,\eta}^p(\mu, \nu)\!\!=\!\!\argmin_{\pi \in \Pi(\mu, \nu)}\! \int_{\!\Xx \times \Xx} \hspace{-.26in}\|x-y\|^p\mathrm{d}\pi(x, y) 
  \!+\!\eta  \mathrm{KL}(\pi\| \mu\!\otimes\!\nu).
\end{split}
\end{align}

 As the $\mathrm{KL}$ term is nonnegative, $W_{p,\eta}^p$ upper-bopunds $W_p^p$ for all $p$, and thus any coreset under $W_{1,\eta}$ and $W_{2,\eta}$ is also a coreset under $W_1$ and $W_2$. Due to the entropic term, however, we have $W_{p,\eta}(\mu,\mu) = O(\eta)$ \cite{genevay2018learning}, so even with a large number of samples $n$ in the coreset, it is not always possible to get an $\epsilon$-coreset for $W_{p}$ using $W_{p,\eta}$. In practice, we observe that this regularizer yields mode collapse of the coreset, with the number of modes decreasing as $\eta$ increases.
 
 To alleviate this issue, \citet{genevay2018learning} introduce Sinkhorn divergences, defined via
$$ SD_{p,\eta}(\mu,\nu) = W_{p,\eta}(\mu,\nu) - \frac{1}{2} \left( W_{p,\eta}(\mu,\mu) + W_{p,\eta}(\nu,\nu) \right).$$
The additional terms ensure that $SD_{p,\eta}(\mu,\mu) = 0$. Interestingly, when $\eta$ goes to infinity, Sinkhorn divergences converge to MMD defined in \eqref{eq:mmd-rkhs} with kernel $\kappa(x,y) = - \|x-y\|^p$ for $0<p<2$. While solving \eqref{eq:csproblem} using $SD_{p,\eta}$ can be faster than with $W_{p}$, especially for larger coreset sizes, we do not have theoretical guarantees for the minimizer.

\subsection{Algorithms}
Recall that the goal of our measure coreset algorithms is to find a set of points $\{x_1, \ldots, x_n\}$ that minimizes some Wasserstein distance to a given distribution. Here, we detail how this goal is achieved by leveraging the dual of the Wasserstein problem. In particular, we give algorithms that compute coresets under the $W_1$ and $W_2$, via the updates specific to each setting.

\paragraph{Minimizing $W_1$ and $W_2$.} 

\begin{algorithm}[!t]
  \caption{Compute an online $W_1$ coreset via SGD}
  \label{alg:w1_coreset}
  \begin{algorithmic}[1]
    \REQUIRE{Measure $\mu$, $n > 0$, minibatch size $m$, $\gamma > 0$}
    \ENSURE{Points $x_1, \ldots, x_n$}
    \STATE{Initialize $(x_1,\dots,x_n)\sim \mu$}
    \FOR{$k=1, \ldots$}
    \STATE{Sample $(y_1,\dots,y_m)\sim \mu$}
    \STATE{Update estimate of $v^*$ using samples $y_k$.}
    \STATE{Define generalized Voronoi regions $V_i(v^*)$.}
    \STATE{Step: $x_i \gets x_i - \frac{\gamma}{\sqrt{k}} \sum_{y_k \in V_i(v^*)} \frac{1}{|V_i(v^*)|}\frac{y_k - x_i}{\|y_k - x_i\|}$}. \label{eq:gradientstep}
    \ENDFOR
  \end{algorithmic}
\end{algorithm}

\begin{algorithm}[!t]
  \caption{Compute an online $W_2$ coreset via SGD}
  \label{alg:w2_coreset}
  \begin{algorithmic}[1]
    \REQUIRE{Measure $\mu$, $n > 0$, minibatch size $m$, $\gamma > 0$}
    \ENSURE{Points $x_1, \ldots, x_n$}
    \STATE{Initialize $(x_1,\dots,x_n)\sim \mu$}
    \FOR{$k=1, \ldots$}
    \STATE{Sample $(y_1,\dots,y_m)\sim \mu$}
    \STATE{Update estimate of $v^*$ using samples $y_k$.}
    \STATE{Define generalized Voronoi regions $V_i(v^*)$.}
    \STATE{Update: $x_i \gets \sum_{y_k \in V_i(v^*)} \frac{1}{|V_i(v^*)|}y_k$}.
    \ENDFOR
  \end{algorithmic}
\end{algorithm}

In the semi-discrete case, when $\nu = \frac{1}{n}\sum_{i=1}^n \delta_{x_i}$,  computing the Wasserstein distance can be cast as maximizing an expectation: 
\begin{align}
  W_p^p \left(\nu , \mu \right) = \max_{v\in \R^n} \mathbb{E}_{\mu} \Big[ \min_{i} & \left(\|X - x_i\|^p  - v_i\right)+ \frac{1}{n} \sum_{i=1}^n  v_i \Big],
  \label{eq:semidiscrete}
\end{align}
which can be optimized via stochastic gradient methods \citep{genevay_stochastic_2016,DBLP:journals/corr/abs-1802-05757}. 
The gradients w.r.t.\ $x_i$ can be written in terms of power 
diagrams:
\begin{align}
    \nabla_{x_i} W_1\left(\frac{1}{n} \sum_{i=1}^n \delta_{x_i}, \mu \right) &= \int_{V_i(v^*)} \frac{x - x_i}{\|x - x_i\|}\,\mathrm{d}\mu(x) \label{eq:gradw1}\\ 
    \nabla_{x_i} W_2^2\left(\frac{1}{n} \sum_{i=1}^n \delta_{x_i}, \mu \right) &= x_i - \int_{V_i(v^*)} x\,\mathrm{d}\mu(x) \label{eq:gradw2}
\end{align}
where $v^*$ is the solution of \eqref{eq:semidiscrete} and $V_i(v) = \{x : \|x - x_i\|^p - v_i \leq \|x - x_j\|^p - v_j, \forall j \neq i\}$ is the generalized Voronoi region of point $x_i$ with $p = 1$ for $W_1$, and $p = 2$ for $W_2$.

Thus, a gradient step in the point positions $x_i$ requires first solving \eqref{eq:semidiscrete} to get the optimal $v$, and then computing the gradients according to \eqref{eq:gradw1}, \eqref{eq:gradw2}. For $W_2^2$, the gradient step can be replaced by a fixed point iteration  \cite{DBLP:journals/corr/abs-1802-05757}.

\paragraph{Minimizing $W_{p,\eta}$ and $SD_{p,\eta}$.} Due to the mode collapse inherent to large regularization $\eta$ mentioned in \S\ref{sec:reg-was}, Sinkhorn divergences empirically are better candidates to construct coresets. Following \cite{genevay2018learning}, we compute $\nabla_x SD_{p,\eta}$ using automatic differentiation of the objective. The resulting algorithm is identical to Algorithm~\ref{alg:w1_coreset}, where $\nabla_x W_1$ gradient in line~(6) is replaced by $\nabla_x SD_{p,\eta}$.




\subsection{Convergence}
We mention some observations on the convergence of our approach. The minimization over the $x$ variables is not convex due to inherent symmetries in the solution space, and $W_p(\cdot, \cdot)$ is not sufficiently smooth in the $x$ variables to give precise convergence guarantees. 

In Algorithms \ref{alg:w1_coreset} and \ref{alg:w2_coreset}, we specify the number of points in the coreset. This parameter is unlike discrete coreset algorithms, which take $\varepsilon$ as an input and return a coreset with enough points to satisfy the coreset inequality. Because our input is a measure that is absolutely continuous with respect to the Lebesgue measure, we do not have the luxury of this approach. An illustrative example is to consider $\varepsilon = 0$. In this case, a discrete coreset algorithm would simply return the original dataset. For a continuous $\mu$, however, there is no finite distribution that has $0$ error relative to $\mu$.




\subsection{Implementation details}
\label{sec:analysis:cons}
 Construction time depends strongly on the characteristics of the measure we are approximating. Most of the time is spent evaluating the expectations in \eqref{eq:gradw1}, \eqref{eq:gradw2}. Since we run the gradient ascent until $\|\nabla_w F\|_2 \leq \varepsilon$ and perform $T$ fixed point iterations,
the construction requires $O(\nicefrac{T}{\varepsilon})$ calls to an oracle that computes densities of the power cells
$V_i(v)$. 

The algorithms for $W_1$ and $W_2$ were implemented in C++ using the Eigen matrix library \cite{eigenweb} and run on an Intel i7-6700K processor with 4 cores and 32GB of system memory. Computing expectations under samples from $\mu$ can be trivially parallelized. The total coreset construction time ranges from a few seconds for small coresets on small datasets, to 5 minutes on large datasets where large coresets are required. The Sinkhorn divergence coresets were implemented in TensorFlow and run on the same architecture without GPU support. Since our code for $W_p$ is in C++, we do not observe significant computational speedup when using Sinkhorn divergences in our experiments. As the resulting coresets are merely an approximation of $W_p$ coresets, we do not display them in the experimental results.

All algorithms were run 20 times -- we display the mean and standard deviations in our plots. Regarding the parameters in Algorithms \ref{alg:w1_coreset} and \ref{alg:w2_coreset}, we use a step size $\gamma=1$ and 100 iterations.


%
\begin{figure}[!tb]
\centering
\includegraphics[width=0.95\columnwidth]{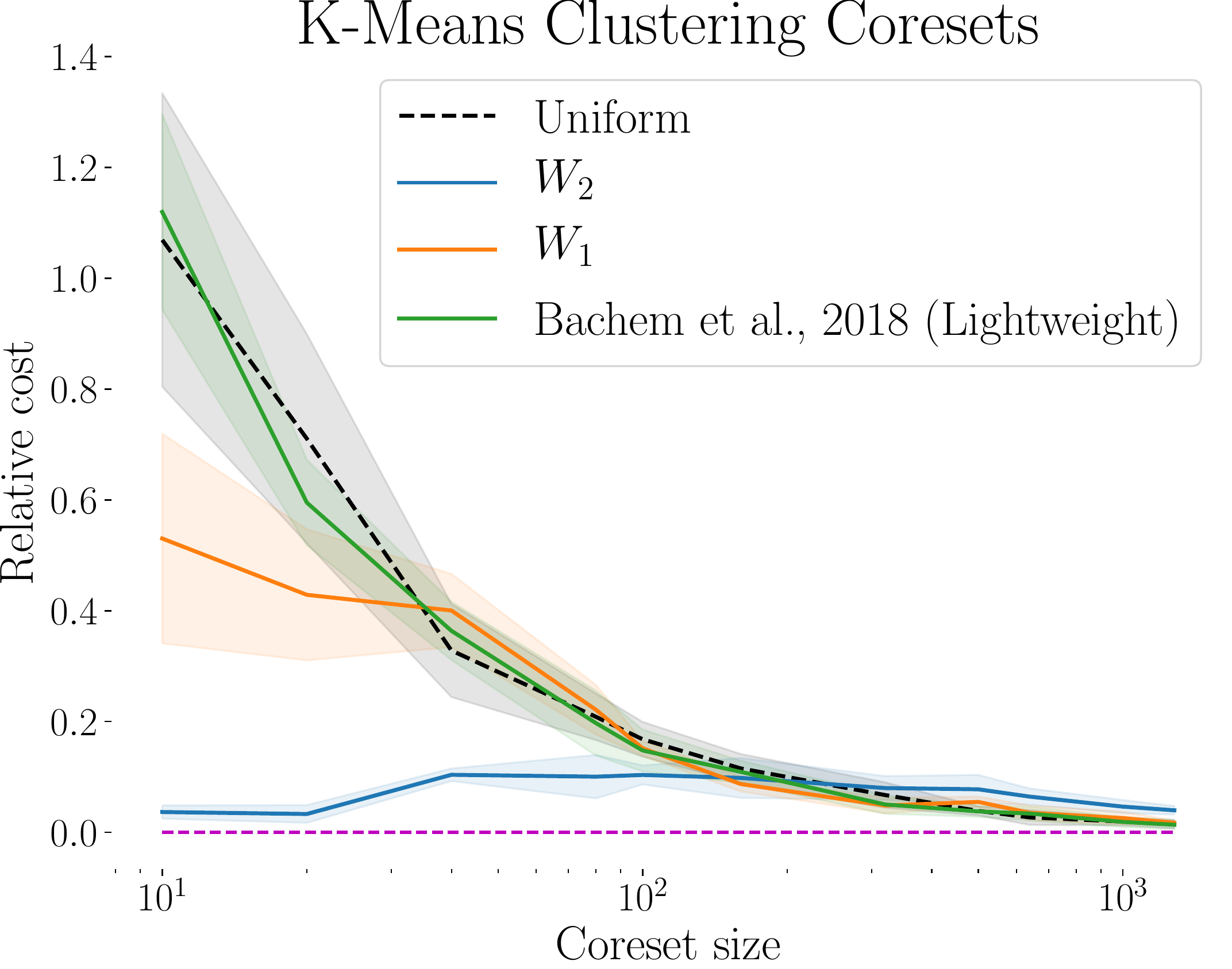}
\caption{Coreset construction on the Pendigit dataset \cite{keller2012hics} for the $k$-means algorithm. We compute the $k$-means cost on the full data using means learned on the coreset. The $y$ axis measures relative error to computing the cost using the means learned on the full data. Comparison is with \cite{DBLP:conf/kdd/BachemL018}. We expect (and verify) that $W_2$ coresets perform better than $W_1$ coresets on this problem.}
\label{fig:kmeans}
\end{figure}
\begin{figure}[!tb]
\centering
\includegraphics[width=0.95\columnwidth]{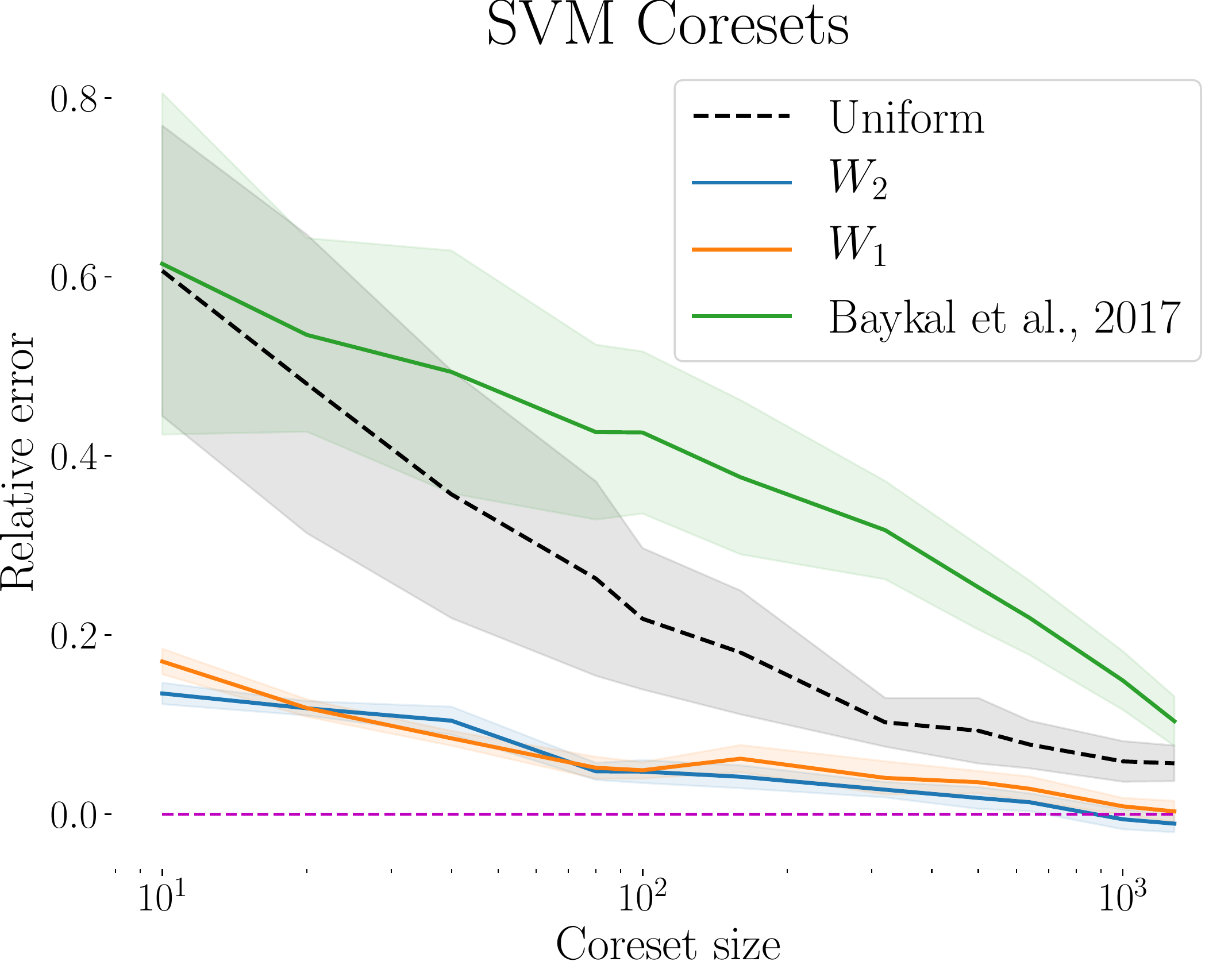}
\caption{Coreset construction on the UCI credit card dataset \cite{DBLP:journals/eswa/YehL09a} for SVM classification. We compute relative accuracy with respect to training a classifier on all the data. Comparison is with \cite{DBLP:journals/corr/abs-1708-03835}. Soft margin SVMs minimize a Lipschitz cost, and we expect both $W_1$ and $W_2$ coresets to perform well.}
\label{fig:svm}
\end{figure}
\begin{figure}[!tb]
\centering
\includegraphics[width=0.95\columnwidth]{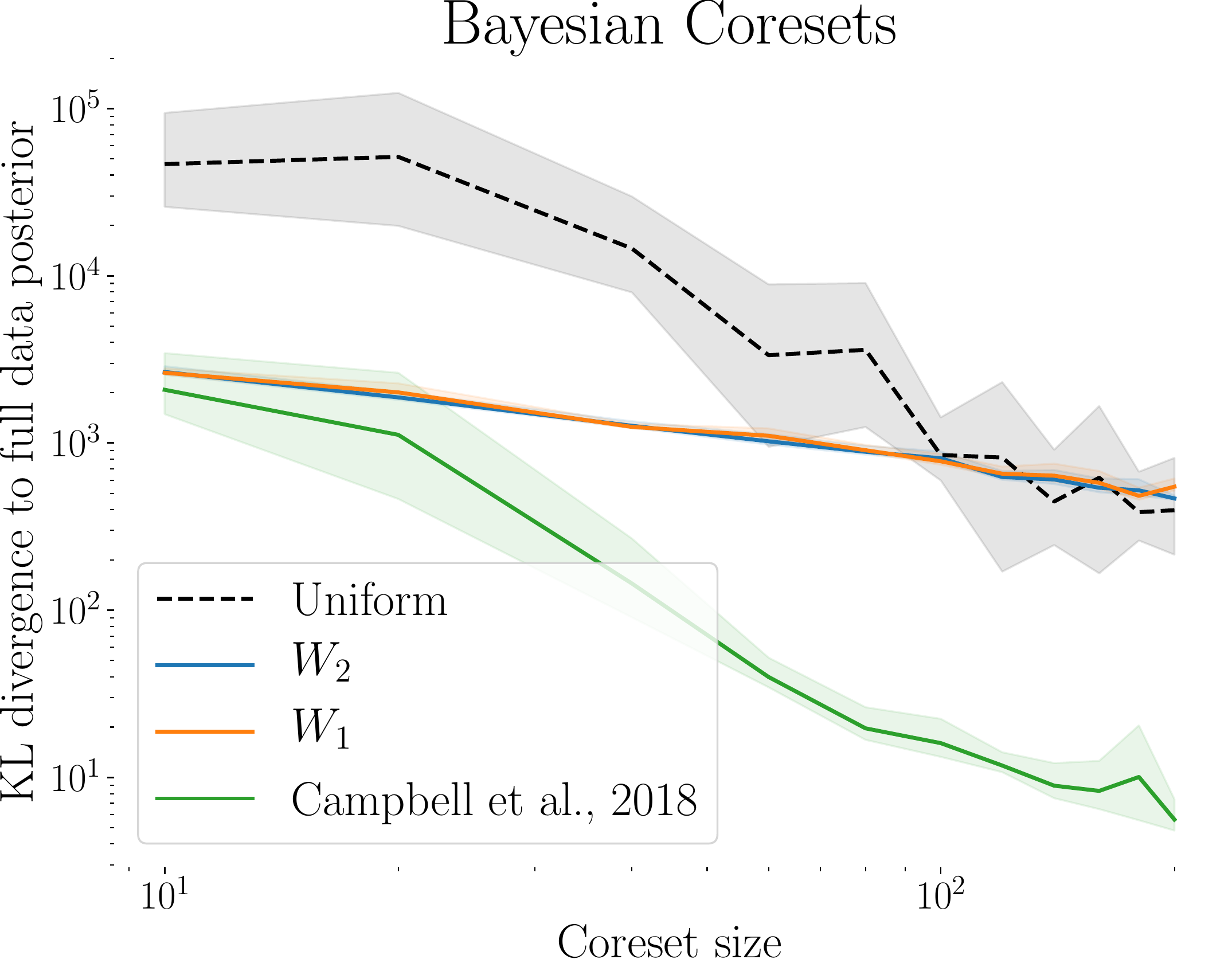}
\caption{Coreset construction on a synthetic dataset (described in \ref{sec:bayesian-core}). The goal is to approximate the posterior distribution for a logistic regression model, and we report the $\mathrm{KL}$ divergence to the true posterior learned on the full data. Comparison is with \cite{Campbell19_JMLR}. The $\log$ likelihood of the model is Lipschitz, and we expect similar performance from $W_1$ and $W_2$ coresets.}
\label{fig:bayesian}
\vspace{-.2in}
\end{figure}

\section{Comparison with Classical Coresets}
\label{sec:experiments}
We compare with classical coreset constructions on a few problems. Each of the three tasks we consider has a specialized coreset construction algorithm that does not extend to other problems.  
Our coresets, on the other hand, do not have this limitation, but broad applicability may come at the price of performance. Even so, our coresets perform better than uniform on the three tasks we have chosen ($k$-means clustering, SVM classification, posterior inference), and greatly \textit{outperform} state-of-the-art algorithms for the first two.

\subsection{$k$-means clustering}
\label{sec:kmeans-core}
The $k$-means objective for a fixed set of cluster centers $Q$ is given by $J(Q)=\sum_{x\in X}\min_{q\in Q} \|x - q\|^2$.

When $Q$ is a subset of a compact set,  this cost has bounded Sobolev norm but is not Lipschitz. We expect $W_1$ coresets to perform worse than $W_2$ coresets on this problem. To measure performance, we compute coresets on the Pendigits dataset \cite{keller2012hics} and compute relative cost $1-J(Q_{c})/J(Q^*)$ of the centers learned on the coreset $Q_{c}$ against the centers learned on the full data $Q^*$. We compare with the importance sampling method of \citet{DBLP:conf/kdd/BachemL018}. The number of clusters we expect in the data is $10$, one for each digit.

In this experiment, \cite{DBLP:conf/kdd/BachemL018} does not exhibit a clear advantage over uniform sampling. This suggests that their method is better suited to larger datasets. On the other hand, when using $W_2$ coresets, our method is on par with the minimal error for a coreset of $10$ points. This is not surprising, as minimizing \eqref{eq:csproblem} with $W_2$ and $n=k$ support points is equivalent to minimizing the $k$-means objective with balanced cluster assignments \cite{pollard1982quantization, canas_learning_2012}. This example demonstrates that our stochastic gradient descent approach is an efficient means of solving balanced $k$-means problems over large datasets, since we only access small-sized batches of the data at each iteration and never process the whole dataset at once.

\subsection{Support vector machine classification}
\label{sec:svm-core}
The soft margin SVM cost of a point $x_i$ with label $y_i$ is given by $y_i (w^\intercal x_i + b) - 1 + \xi_i,$
where $\xi_i$ is a slack variable associated to $x_i$. This cost is Lipschitz with a constant depending on the diameter of the set of allowable $w$'s. 

Because SVMs solve classification problems and our coresets approximate a dataset, our experimental setup here is slightly different than for $k$-means. Instead of constructing a coreset on the $(x_i, y_i)$ pairs in the training data, we construct individual coresets for all data associated to a single label and merge them afterward. 
Hence, the coreset contains equal numbers of positive and negative samples. We hypothesize that this property and the tendency of coresets to remove large outliers explains why in Figure \ref{fig:svm} our coresets can yield better classifiers than training on the full data for large coreset size.

\subsection{Bayesian inference}
\label{sec:bayesian-core}
We construct a synthetic dataset for logistic regression by drawing $x_i \sim \mathcal{N}(0, I)$ and labeling the $x_i$ by
\begin{align}
    \theta\sim \mathcal{N}\left(0, I\right)\quad y_i\: |\: x_i, \theta \sim \mathrm{Bern}\left(\frac{1}{1 + e^{-x_i^\intercal \theta}}\right).
\end{align}
The goal is to construct a (weighted) coreset that approximates the $\log$ likelihood of the full data $\sum_i \log p(y_i\: |\: \theta)$. This cost is Lipschitz in this particular case. To agree with \cite{Campbell19_JMLR}, instead of computing the relative $\log$ likelihood of our coreset against that of the full data, we use the coreset to infer the parameters of the posterior distribution and report $\mathrm{KL}$ divergence against the posterior learned on the entire dataset. 
Figure \ref{fig:bayesian} shows results on a dataset of $20000$ points drawn from a 5-dimensional Gaussian distribution. While we do not match the performance of \cite{Campbell19_JMLR}, our coreset performs significantly better than a uniform sample.
\section{Discussion}
\label{sec:conclusion}

Learning problems are frequently posed as finding the best hypothesis that minimizes expected loss under a data distribution. 
However classic coreset theory ignores that the samples from the dataset are drawn from some distribution. We have introduced a notion of \emph{measure coreset} whose goal is to minimize generalization error of the coreset against the data distribution. Our definition is the natural one, and we can draw connections between this generalized notion of a coreset and optimal transport theory that leads to online construction algorithms. 

As our paper is exploratory, there are many avenues for future research. For one, our definitions rely on identities and inequalities that relate large function families to  $W_1$ and $W_2$. If we cannot assume much about $\mu$, then these relations cannot be refined. The theory in our paper, however, does not sufficiently explain the effectiveness of our coreset constructions on the learning problems in \S\ref{sec:experiments}. 

Our algorithm's performance suggests several questions. There is a gap between the statistical knowledge we have about the sample complexity of $W_1$ and $W_2$ and the behavior of Algorithms \ref{alg:w1_coreset} and \ref{alg:w2_coreset} in the few-samples regime. Additionally, being able to get a coreset condition similar to Proposition~\ref{prop:coreset-cond} for Sinkhorn divergences would allow us to leverage their improved sample complexity compared to Wasserstein distances, yielding tighter theoretical bounds for the number of points required to be an $\epsilon$-measure coreset.




\bibliography{biblio_coresets}
\bibliographystyle{icml2020}

\appendix
\clearpage
\begin{figure}[!t]
\includegraphics[width=.49\textwidth]{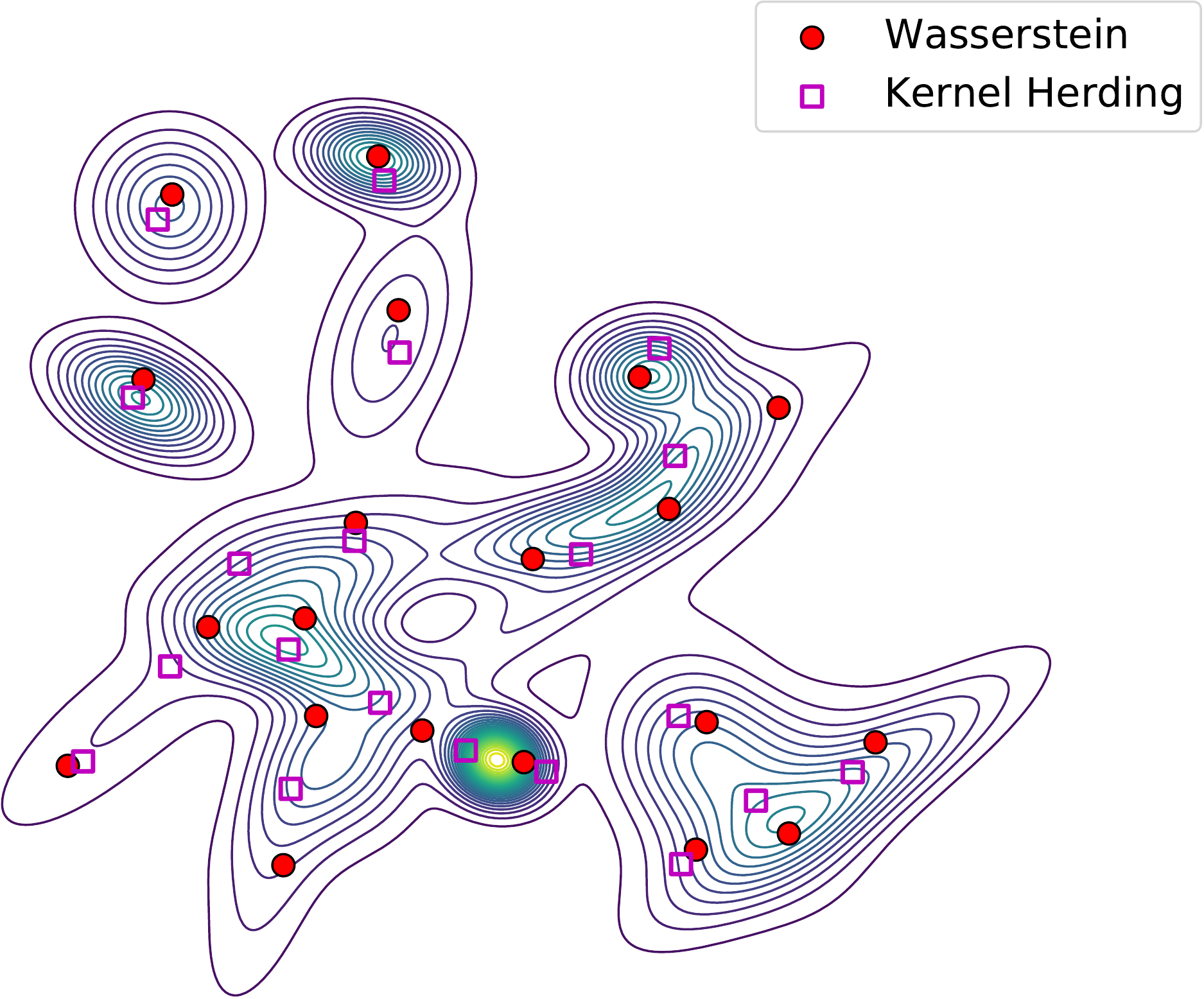}
\caption{Comparison with kernel herding on a mixture of Gaussians. The first twenty points obtained from herding are plotted against a twenty point coreset under the $W_2$ distance.}
\label{fig:herding}
\end{figure}

\begin{figure*}[tb]
\includegraphics[width=.49\textwidth]{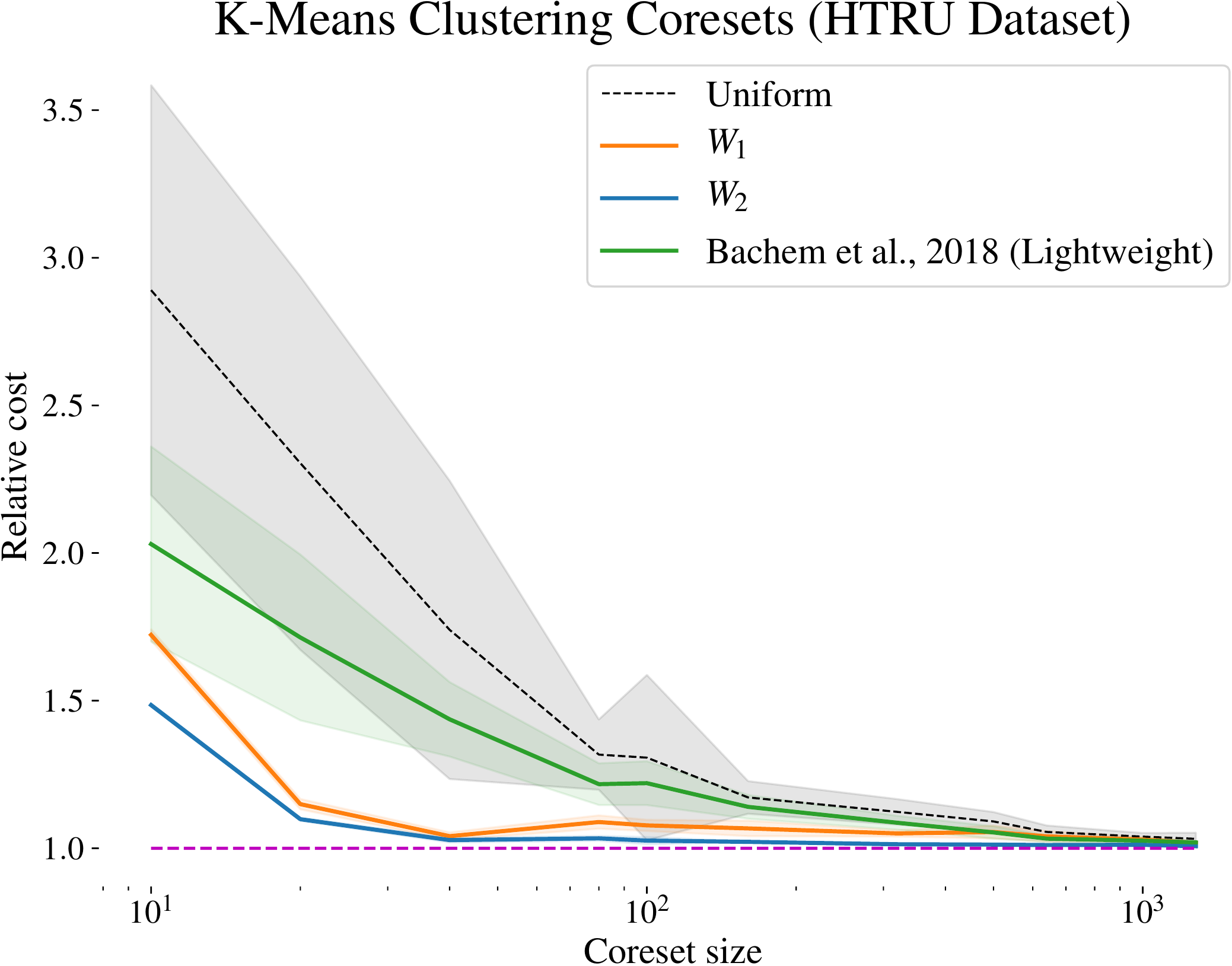}
\includegraphics[width=.49\textwidth]{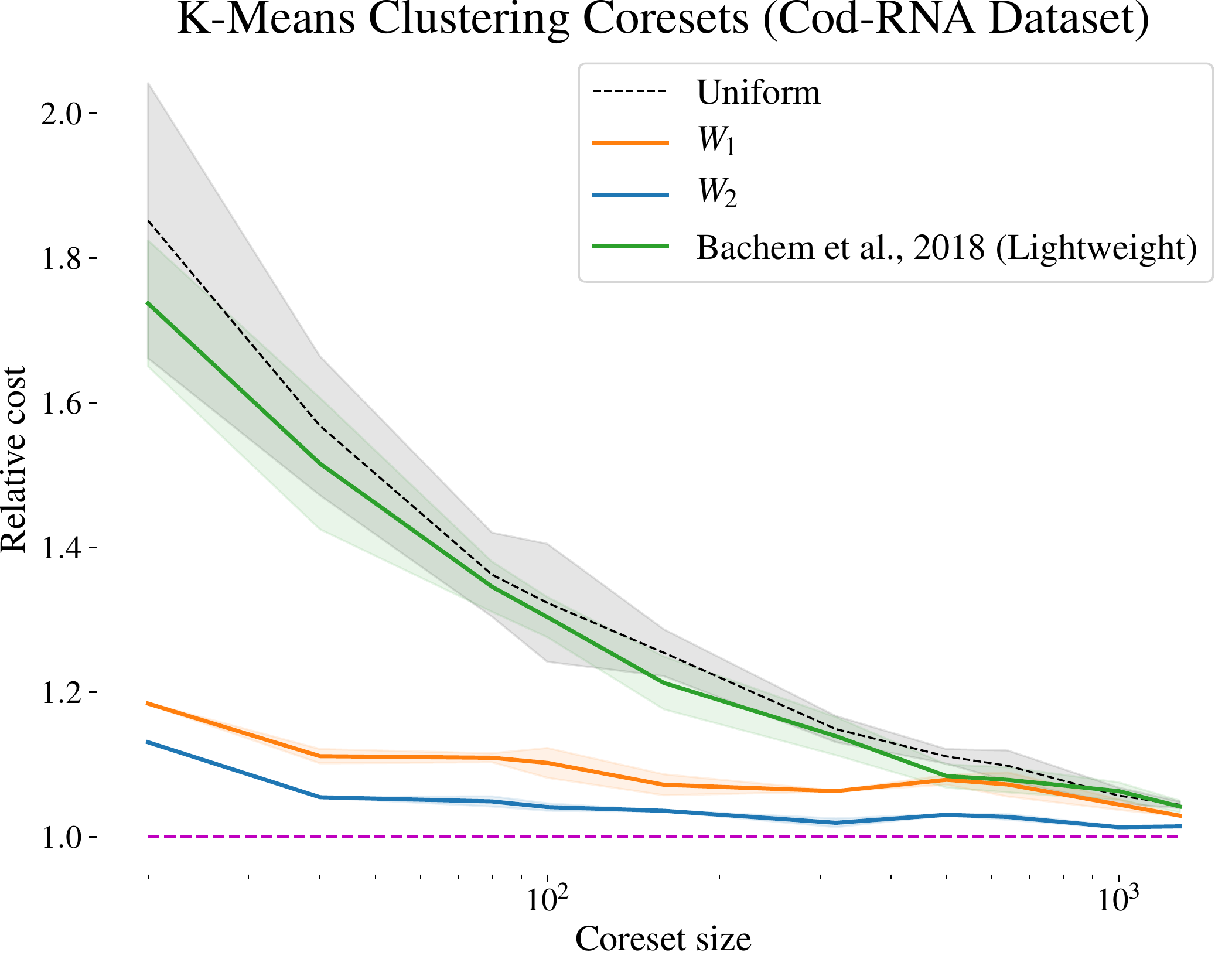}
\caption{Additional results for $k$-means clustering.}
\label{fig:kmeans-supp}
\end{figure*}

\begin{figure*}[tb]
\includegraphics[width=.49\textwidth]{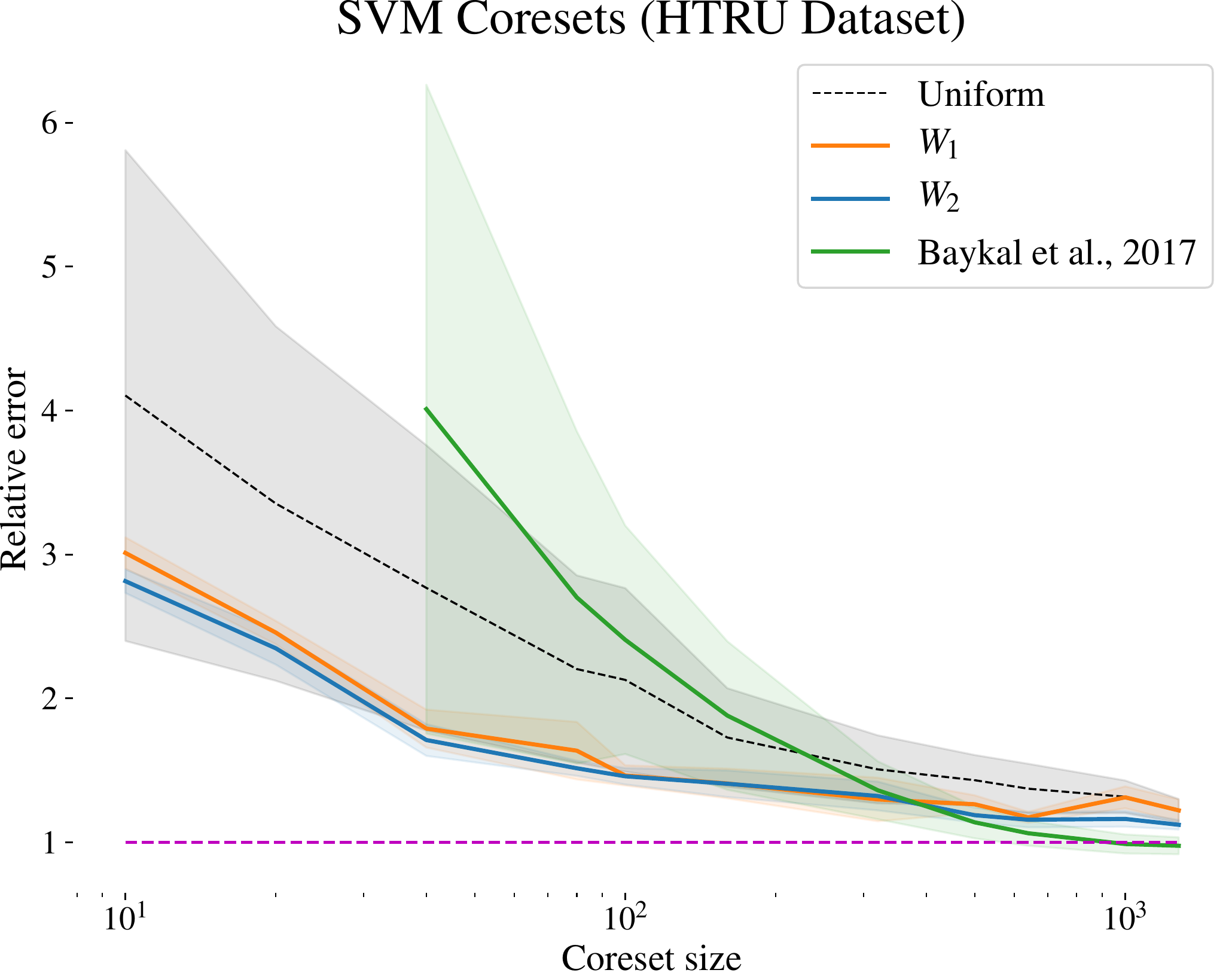}
\includegraphics[width=.49\textwidth]{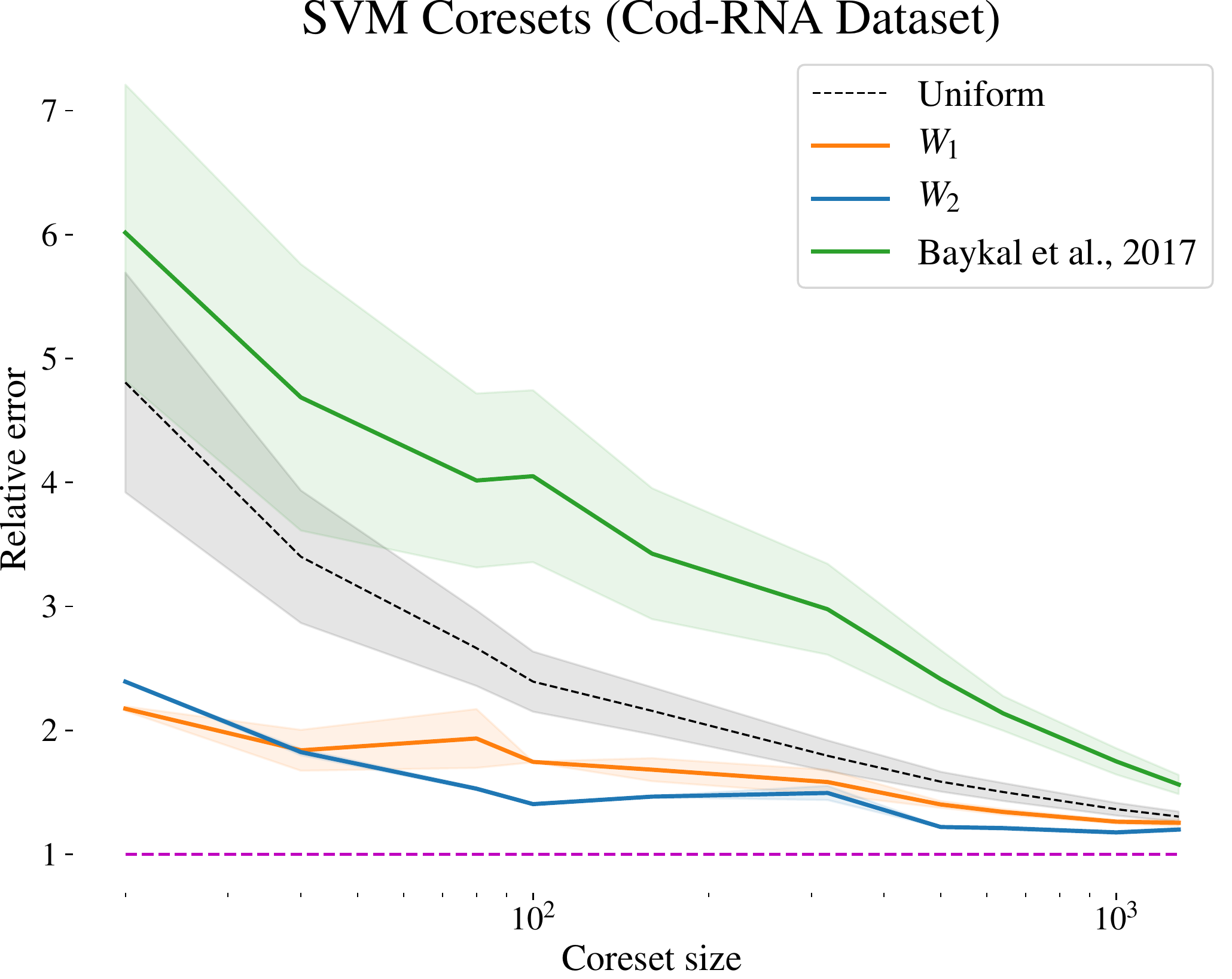}
\caption{Additional results for SVM classification.}
\label{fig:svm-supp}
\end{figure*}

\section{Additional Results}
We present additional experimental results on the HTRU dataset \cite{lyon2016fifty}, and the RNA coding dataset \cite{uzilov2006detection}. We test our SVM coreset and $k$-means coresets against uniform samples and state of the art coreset constructions.

Results for $k$-means are shown in Figure \ref{fig:kmeans-supp}. Results for SVMs are shown in Figure \ref{fig:svm-supp}.

\section{Comparison with Kernel Herding}
We have mentioned constructing coresets under the maximum mean discrepancy. Coresets under the MMD distance can be constructed using kernel herding, as shown in \cite{DBLP:conf/uai/ChenWS10,DBLP:conf/aistats/Lacoste-JulienL15}. We give a qualitative comparison between $W_2$ coresets and samples obtained from herding on the mixture of Gaussian example from \cite{DBLP:conf/uai/ChenWS10} in Figure \ref{fig:herding}.

\end{document}